\newcommand{\R}{\mathbb{R}}
\newcommand{\Order}{\mathcal{O}}
\DeclareMathOperator{\sign}{sign}
\newcommand\mycommfont[1]{\footnotesize{1}}
\newtheorem{lemma}{Lemma}
\begin{document}

\title{Speeding Up Budgeted Stochastic Gradient Descent SVM Training with Precomputed Golden Section Search}
\author{Tobias Glasmachers and Sahar Qaadan\\Institut f{\"u}r Neuroinformatik, Ruhr-Universit\"at Bochum, Germany\\\texttt{tobias.glasmachers@ini.rub.de, sahar.qaadan@ini.rub.de}}
\date{}

\maketitle

\begin{abstract}
Limiting the model size of a kernel support vector machine to a
pre-defined budget is a well-established technique that allows to
scale SVM learning and prediction to large-scale data. Its core
addition to simple stochastic gradient training is budget maintenance
through merging of support vectors. This requires solving an inner
optimization problem with an iterative method many times per gradient
step. In this paper we replace the iterative procedure with a fast
lookup. We manage to reduce the merging time by up to $65\%$ and the
total training time by $44\%$ without any loss of accuracy.
\end{abstract}

\section{Introduction}

The Support Vector Machine (SVM; \cite{cortes1995support}) is a %citealt
widespread standard machine learning method, in particular for binary
classification problems.
Being a kernel method, it employs a linear algorithm in an implicitly
defined kernel-induced feature space \cite{Vapnik:95}. %citep
SVMs yield high predictive accuracy in many applications \cite{Noble:2004,Lewis:2006a,Cui:2007,Lin:2014,Yu:2016}.
They are supported by strong learning theoretical guarantees \cite{Joachims:98,Bottou:2007,Mohri:2012,Hare:2016}.

%% computational complexity problem for large-scale data
When facing large-scale learning, the applicability of support vector
machines (and many other learning machines) is limited by their
computational demands. Given $n$ training points, training an SVM with
standard dual solvers takes quadratic to cubic time in $n$
\cite{Bottou:2007}. Steinwart \cite{steinwart2003sparseness} established
that the %citep , cite
number of support vectors is linear in $n$, and so is the storage
complexity of the model as well as the time complexity of each of its
predictions. This quickly becomes prohibitive for large $n$, e.g., when
learning from millions of data points.

%% some related work: parallelization, local SVMs, kernel approximations
Due to the prominence of the problem, a large number of solutions was
developed. Parallelization can help \cite{Zanni:2006,Zhu:2009}, but it
does not reduce the complexity of the training problem.
One promising route is to solve the SVM problem only locally, usually
involving some type of clustering \cite{zhang2006svm,ladicky2011locally}
or with a hierarchical divide-and-conquer strategy \cite{graf:2005,hsieh2014divide}.
An alternative approach is to leverage the progress in the domain of
linear SVM solvers \cite{Joachims:99,Zhang:2004,Hsieh:2008}, which scale well to large data sets. To this end, kernel-induced feature
representations are approximated by low-rank approaches
\cite{fine2001efficient,Rahimi:2007,zhang2012scaling,Yadong:2014}, either a-priory using random Fourier features, or in a data-dependent
way using Nystr{\"o}m sampling.

%% budget as a "good" solution
Budget methods, introducing an a-priori limit $B \ll n$ on the number of
support vectors \cite{Nguyen:2005,Wang:2012}, go one step%Dekel:2006,
further by letting the optimizer adapt the feature space approximation
during operation to its needs, which promises a comparatively low
approximation error. The usual strategy is to merge support vectors at
need, which effectively enables the solver to move support vectors
around in input space. Merging decisions greedily minimize the
approximation error.

%% our contribution
In this paper we propose an effective computational improvement of this
scheme. Finding the best merge partners, i.e., support vectors that
induce the lowest approximation error when merged, is a rather costly
operation. Usually, $\Order(B)$ candidate pairs of vectors are
considered, and for each pair an optimization problem is solved with an
iterative strategy. By modelling the low-dimensional space of (solutions
of the) optimization problems explicitly, we can remove the iterative
process entirely, and replace it with a simple and fast lookup.

Our results show that merging-based budget maintenance can account for
more than half of the total training time. Therefore reducing the
merging time is a promising approach to speeding up training. The
speed-up can be significant; on our largest data set we reduce the
merging time by $65\%$, which corresponds to a reduction of the total
training time by $44\%$.
At the same time, our lookup method is at least as accurate as the
original iterative procedure, resulting in nearly identical merging
decisions and no loss of prediction accuracy.

%% structure of the paper
The reminder of this paper is organized as follows. In the next section
we introduce SVMs and stochastic gradient training on a budget. Then we
analyze the computational bottleneck of the solver and develop a lookup
smoothed with bilinear interpolation as a remedy.
In section~\ref{section:experiments} we benchmark the new algorithm
against ``standard'' BSGD, and we investigate the influence of the
algorithmic simplification on different budget sizes. Our results
demonstrate systematic improvements in training time at no cost in terms
of solution quality.

\section{Support Vector Machine Training}

In this section we introduce the necessary background: SVMs for binary
classification, and training with stochastic gradient descent (SGD) on a
budget, i.e., with a-priori limited number of support vectors.

\paragraph{Support Vector Machines}

An SVM classifier is a supervised machine learning algorithm.
In its simplest form it linearly separates two classes with a large
margin. When applying a kernel function $k : X \times X \to \R$ over the
input space $X$, the separation happens in a reproducing kernel Hilbert
space (RKHS). For labeled data
$\big((x_1, y_1), \dots, (x_n, y_n)\big) \in (X \times \{-1, +1\})^n$,
the prediction on $x \in X$ is computed as
\begin{align*}
	\sign \Big( \big\langle w, \phi(x) \big\rangle + b \Big) &= \sign \left( \sum_{j=1}^n \alpha_j k(x_j, x) + b \right)
\end{align*}
with $w = \sum_{j=1}^n \alpha_j \phi(x_j)$,
where $\phi(x)$ is an only implicitly defined feature map (due to
Mercer's theorem, see also \cite{Vapnik:95}) corresponding to the
kernel function fulfilling $k(x, x') = \langle \phi(x), \phi(x') \rangle$.
Training points $x_j$ with non-zero coefficients $\alpha_j \not= 0$ are
called support vectors; the summation in the predictor can apparently be
restricted to this subset. 
The SVM model is obtained by minimizing the following (primal) objective
function:
\begin{align}
	P(w, b) = \frac{\lambda}{2} \|w\|^2 + \frac{1}{n} \sum_{i=1}^n L \Big( y_i, \big\langle w, \phi(x_i) \big\rangle + b \Big).
	\label{eq:primal}
\end{align}
Here, $\lambda > 0$ is a user-defined regularization parameter and
$L(y, \mu) = \max\{0, 1 - y \cdot \mu\}$ denotes the hinge loss, which
is a prototypical large margin loss, aiming to separate the classes with
a functional margin $y \cdot \mu$ of at least one. The incorporation of
other loss functions allows to generalize SVMs to other tasks like
multi-class classification, regression, and ranking.

\paragraph{Primal Training}

Problem~\eqref{eq:primal} is a convex optimization problem without
constraints. It has an equivalent dual representation as a quadratic
program (QP), which is solved by several state-of-the-art ``exact''
solvers like LIBSVM \cite{Chang:2011} and thunder-SVM \cite{thundersvm}.
The main challenge is the high dimensionality of the problem, which
coincides with the training set size~$n$ and can hence easily grow into
the millions.

A simple method is to solve problem~\eqref{eq:primal} directly with
stochastic gradient descent (SGD), similar to neural network training.
When presenting one training point at a time, as done in Pegasos
\cite{shalev2011pegasos}, the objective function $P(w, b)$ is
approximated by the unbiased estimate
\begin{align*}
	P_i(w, b) = \frac{\lambda}{2} \|w\|^2 + L \Big( y_i, \big\langle w, \phi(x_i) \big\rangle + b \Big),
\end{align*}
where the index $i \in \{1, \dots, n\}$ follows a uniform distribution.
The stochastic gradient $\nabla P_i(w, b)$ is an unbiased estimate of
the ``batch'' gradient $\nabla P(w, b)$ but faster to compute by a
factor of $n$, since it involves only a single training point.
Starting from $(w, b) = (0, 0)$, SGD updates the weights according to
\begin{align*}
	(w, b) \leftarrow (w, b) - \eta_t \cdot \nabla P_{i_t}(w, b),
\end{align*}
where $t$ is the iteration counter. With a learning rate $\eta_t \in
\Theta(1/t)$ it is guaranteed to converge to the optimum of the convex
training problem \cite{Bottou:2010}.

With a sparse representation
$w = \sum_{(\alpha, \tilde{x}) \in M} \alpha \cdot \phi(\tilde{x})$
the SGD update decomposes into the following algorithmic steps. We scale
down all coefficients $\alpha$ uniformly by the factor
$1 - \lambda \cdot \eta_t$. If the margin
$y_i (\langle w, \phi(x_i)\rangle + b)$ happens to be less than one,
then we add a new point $\tilde{x} = x_i$ with coefficient
$\alpha = \eta_t \cdot y_i$ to the model~$M$.
With a dense representation holding one coefficient $\alpha_i$ per data
point $(x_i, y_i)$ we would add the above value to $\alpha_i$. The most
costly step is the computation of $\langle w, \phi(x_i) \rangle$, which
is linear in the number of support vectors (SVs), and hence generally
linear in~$n$ \cite{steinwart2003sparseness}.

\paragraph{SVM Training on a Budget}

Budgeted Stochastic Gradient Descent (BSGD) breaks the unlimited growth
in model size and update time for large data streams by bounding the
number of support vectors during training. The upper bound $B \ll n$ is
the budget size. Per SGD step the algorithm can add at most one new
support vector; this happens exactly if $(x_i, y_i)$ does not meet the
target margin of one (and $\alpha_i$ changes from zero to a non-zero
value). After $B+1$ such steps, the budget constraint is violated and a
dedicated budget maintenance algorithm is triggered to reduce the number
of support vectors to at most~$B$. The goal of budget maintenance is to
fulfill the budget constraint with the smallest possible change of the
model, measured by $\|\Delta\|^2 = \|w' - w\|^2$, where $w$ is the
weight vector before and $w'$ is the weight vector after budget
maintenance. $\Delta = w' - w$ is referred to as the weight degradation.

Budget maintenance strategies are investigated in detail in
\cite{Wang:2012}. It turns out that \emph{merging} of two support
vectors into a single new point is superior to alternatives like removal
of a point and projection of the solution onto the remaining support
vectors.
Merging was first proposed in \cite{Nguyen:2005} as a way to efficiently
reduce the complexity of an already trained SVM. With merging, the
complexity of budget maintenance is governed by the search for suitable
merge partners, which is $\Order(B^2)$ for all pairs, while it is common
to apply the $\Order(B)$ heuristic resulting from fixing the point with
smallest coefficient $\alpha_i$ as a first partner.

When merging two support vectors $x_i$ and $x_j$, we aim to approximate
$\alpha_i \cdot \phi(x_i) + \alpha_j \cdot \phi(x_j)$ with a new term
$\alpha_z \cdot \phi(z)$ involving only a single point $z$.
Since the kernel-induced feature map is usually not surjective, the
pre-image of $\alpha_i \phi(x_i) + \alpha_j \phi(x_j)$ under $\phi$ is
empty \cite{sch-et-al:iskm,Burges96simplifiedsupport} and no exact match $z$ exists. %cite
Therefore the weight degradation
$\Delta = \alpha_i \phi(x_i) + \alpha_j \phi(x_j) - \alpha_z \phi(z)$
is non-zero. For the Gaussian kernel
$k(x, x') = \exp(-\gamma \|x - x'\|^2)$, due to its symmetries, the
point $z$ minimizing $\|\Delta\|^2$ lies on the line connecting $x_i$
and $x_j$ and is hence of the form $z = h x_i + (1 - h) x_j$. For
$y_i = y_j$ we obtain a convex combination $0 < h < 1$, otherwise we
have $h < 0$ or $h > 1$. In this paper we merge only vectors of equal
label. For each choice of $z$, the optimal value of $\alpha_z$ is
obtained in closed form:
$\alpha_z = \alpha_i k(x_i, z) + \alpha_j k(x_j, z)$.
This turns minimization of
$\|\Delta\|^2 = \alpha_i^2 + \alpha_j^2 - \alpha_z^2 + 2 k(x_i, x_j)$
into a one-dimensional non-linear optimization problem, which is solved
in \cite{Wang:2012} with golden section line search. The calculations
are further simplified by the relations
$k(x_i, z) = k(x_i, x_j)^{(1-h)^2}$ and $k(x_j, z) = k(x_i, x_j)^{h^2}$,
which save costly kernel functions evaluations.

Budget maintenance in BSGD usually works in the following sequence of
steps, see algorithm \ref{algo:budget}: First, $x_i$ is
fixed to the support vector with minimal coefficient $|\alpha_i|$. Then
the best merge partner $x_j$ is determined by testing $B$ pairs
$(x_i, x_j)$, $j \in \{1, \dots, B+1\} \setminus \{i\}$. Golden section
search is run for each of these steps to determine $h$ to fixed precision
$\varepsilon = 0.01$. The weight degradation is computed using the
shortcuts mentioned above. Finally, the candidate with minimal weight
degradation is selected and the vectors are merged. Hence, although a
single golden search search is fast, the need to run it many times per
SGD iteration turns it into a rather costly operation.
	 \begin{algorithm}
	 
		\textbf{Input/Output:} model $M$

		{$(\alpha_{\min}, \tilde{x}_{\min}) \leftarrow \arg \min \big\{ |\alpha| \,\big|\, (\alpha, \tilde{x}) \in M \big\}$}

		{$W\!D^* \leftarrow \infty$}

		\For{ $(\alpha, \tilde{x}) \in M \setminus \{(\alpha_{\min}, \tilde{x}_{\min})\}$ }
		{
			{$m \leftarrow \alpha / (\alpha + \alpha_{\min})$}

			{$\kappa \leftarrow k(\tilde{x}, \tilde{x}_{\min})$}

			{$h \leftarrow \arg\max \big\{ m \kappa^{(1 - h')^2} + (1-m)\kappa^{h'^2} \big| h' \in [0, 1] \big\}$ \label{line:argmax}}

			{$\alpha_z \leftarrow \alpha_{\min} \cdot \kappa^{(1 - h)^2} + \alpha \cdot \kappa^{h^2}$}

			{$W\!D \leftarrow \alpha_{\min}^2 + \alpha^2 - \alpha_z^2 + 2 \cdot \alpha_{\min} \cdot \alpha \cdot \kappa$}

			\If{$(W\!D<W\!D^*)$}
			{
				{$W\!D^* \leftarrow W\!D$}

				{$(\alpha^*, \tilde{x}^*, h^*, \kappa^*) \leftarrow (\alpha, \tilde{x}, h, \kappa)$}
			}
		}

		{$z \leftarrow h^* \cdot \tilde{x}_{\min} + (1 - h^*) \cdot \tilde{x}^*$}

		{$\alpha_z \leftarrow \alpha_{\min} \cdot (\kappa^*)^{(1-h^*)^2} + \alpha^* \cdot (\kappa^*)^{(h^*)^2}$}

		{$M \leftarrow M \setminus \{(\alpha_{\min}, \tilde{x}_{\min}), (\alpha^*, \tilde{x}^*)\} \cup \{(\alpha_z, z)\}$}

		\caption{\label{algo:budget}
		Procedure Budget Maintenance for a sparse model $M$
		}
	\end{algorithm}
%}

A theoretical analysis of BSGD is provided by \cite{Wang:2012}. Their
Theorem~1 establishes a bound on the error induced by the budget,
ensuring that asymptotically the error is governed only by the
(unavoidable) weight degradation.

\section{Precomputing the Merging Problem}

\begin{wrapfigure}{r}{0.48\textwidth}
	\begin{center}
		\includegraphics[width=0.46\textwidth]{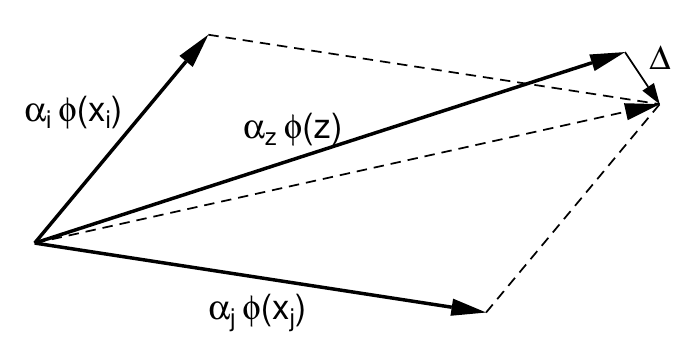}
	\end{center}
	\caption{The merging problem.
		\label{figure:merging}
	}
\end{wrapfigure}

The merging problem for given support vectors $x_i$ and $x_j$ with
coefficients $\alpha_i$ and $\alpha_j$ is illustrated in
figure~\ref{figure:merging}. Our central observation is that the
geometry depends only on the (cosine of the) angle between
$\alpha_i \phi(x_i)$ and $\alpha_j \phi(x_j)$, and on the relative
lengths of the two vectors. These two quantities are captured by the
parameters
\begin{compactitem}
\item
	relative length $m = \alpha_i / (\alpha_i + \alpha_j)$
\item
	cosine of the angle $\kappa = k(x_i, x_j)$,
\end{compactitem}
both of which take values in the unit interval. The optimal merging
coefficient $h$ is a function of $m$ and $\kappa$, and so is the
resulting weight degradation $W\!D = \|\Delta\|^2$. Therefore we can
express $h$ and $W\!D$ as functions of $m$ and $\kappa$, denoted as
$h(m, \kappa)$ and $W\!D(m, \kappa)$ in the following. The functions can
be evaluated to any given target precision by running the golden section
search. Their graphs are plotted in figures~\ref{figure:functionsb} and~\ref{figure:functionsc}.

If the functions $h$ or $W\!D$ can be approximated efficiently then there
is no need to run a potentially costly iterative procedure like golden
section search. This is our core technique for speeding up the BSGD
method.

The functions blend between different budget maintenance strategies.
While for $\kappa \gg 0$ and for $m \approx 1/2$ it is beneficial to
merge the two support vectors, resulting in $h \in (0, 1)$, this is not
the case for $\kappa \ll 1$ and $m \approx 0$ or $m \approx 1$,
resulting in $h \approx 0$ or $h \approx 1$, which is equivalent to
removal of the support vector with smaller coefficient. This means that
in order to obtain a close fit that works well in both regimes we may
need a quite flexible function class like a kernel method or a neural
network, while a simple polynomial function can give poor fits, with
large errors close to the boundaries.

A much simpler and computationally very cheap approach is to pre-compute
the function on a grid covering the domain $[0, 1] \times [0, 1]$. The
values need to be pre-computed only once, and here we can afford to
apply golden section search with high precision; we use
$\varepsilon = 10^{-10}$. Then, given two merge candidates, we can look
up an approximate solution by rounding $m$ and $\kappa$ to the nearest
grid point. The approximation quality can be improved significantly
through bilinear interpolation. On modern PC hardware we can easily
afford a large grid with millions of points, however, this is not even
necessary to obtain excellent results. In our experiments we use a grid
of size $400 \times 400$.

Bilinear interpolation is fast, and moreover it is easy to implement.
When looking up $h(m, \kappa)$ this way, we obtain a plug-in replacement
for golden section search in BSGD. However, we can equally well look up
$W\!D(m, \kappa)$ instead to save additional computation steps.
Another benefit of $W\!D$ over $h$ is regularity, see
figures \ref{figure:functionsb} and \ref{figure:functionsc}
and the following lemma.

\begin{figure}
\begin{subfigure}{0.45\textwidth}
\includegraphics[width=\linewidth]{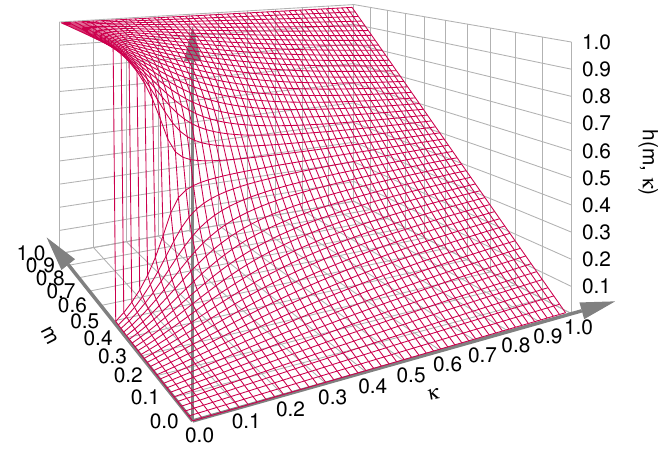}
\caption{} \label{figure:functionsb}
\end{subfigure}
\hspace*{\fill} % separation between the subfigures
\begin{subfigure}{0.45\textwidth}
\includegraphics[width=\linewidth]{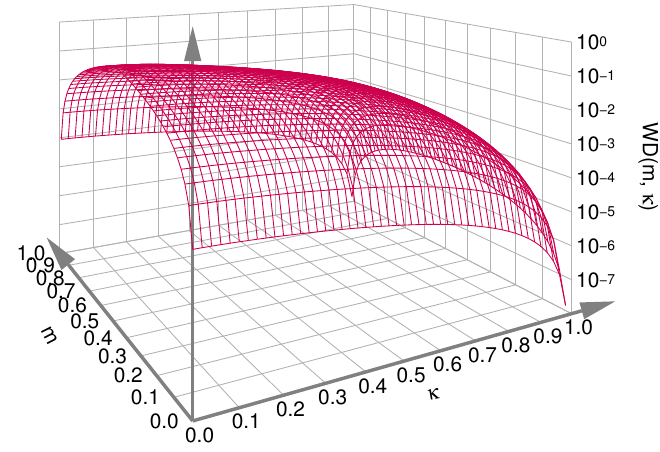}
\caption{}\label{figure:functionsc}
\end{subfigure}
\caption{Graphs of the functions $h(m, \kappa)$ (\ref{figure:functionsb})
	and $W\!D(m, \kappa)$ (\ref{figure:functionsc}).
	The latter uses a log scale on the value axis.
	\label{fig:1}
	}
\end{figure}

\begin{lemma}
The functions $h$ and $W\!D$ are smooth for $\kappa > e^{-2}$.
The function $h$ is continuous outside the set
$Z = \{1/2\} \times [0, e^{-2}] \subset [0, 1]^2$ and discontinuous
on $Z$. The function $W\!D$ is everywhere continuous.
\end{lemma}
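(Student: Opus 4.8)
The plan is to reduce the defining maximization on line~\ref{line:argmax} to a single scalar equation and to read the threshold $\kappa = e^{-2}$ off one clean derivative. I would fix the objective $f_{m,\kappa}(h) = m\,\kappa^{(1-h)^2} + (1-m)\,\kappa^{h^2}$ and set $t = \ln\kappa \le 0$. Since $f_{m,\kappa}$ is smooth in $(h,m,\kappa)$ for $\kappa>0$, stationarity $\partial_h f = 0$ is, after dividing out the factor $2t$ (legitimate for $\kappa\neq 1$), equivalent to $(1-m)\,h\,e^{th^2} = m\,(1-h)\,e^{t(1-h)^2}$, which I rewrite as $R(h) = \tfrac{m}{1-m}$ with $R(h) = \tfrac{h}{1-h}\,e^{t(2h-1)}$. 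This reduction is the key step: it turns the two-parameter family of optimization problems into level sets of the single function $R$.

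The entire dichotomy then follows from one computation. Taking logarithms gives $(\ln R)'(h) = \tfrac{1}{h(1-h)} + 2t$, and since $\tfrac{1}{h(1-h)} \ge 4$ on $(0,1)$ with equality only at $h=\tfrac12$, this is bounded below by $4 + 2t$. Hence for $\kappa > e^{-2}$ (that is, $t > -2$) the function $R$ is strictly increasing, so $R(h) = m/(1-m)$ has a unique root $h^*(m,\kappa)\in(0,1)$; checking $\partial_h f > 0$ at $h=0$ and $\partial_h f < 0$ at $h=1$ identifies this root as the global maximizer. Because $\partial_h R = R\cdot\big(\tfrac{1}{h(1-h)} + 2t\big) > 0$ there, the implicit function theorem applied to $R(h) - m/(1-m) = 0$ makes $h^*$ real-analytic, hence smooth, on $\{\kappa > e^{-2}\}$. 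For $W\!D$ I would first record that it depends on $h$ only through the optimal value: normalizing by $(\alpha_i+\alpha_j)^2$ gives $W\!D = m^2 + (1-m)^2 + 2m(1-m)\kappa - M(m,\kappa)^2$, where $M(m,\kappa) = \max_{h\in[0,1]} f_{m,\kappa}(h)$. Smoothness of $h^*$ yields smoothness of $M = f_{m,\kappa}(h^*)$ and therefore of $W\!D$ on $\{\kappa > e^{-2}\}$.

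Below the threshold I would argue through the global maximizer. When $t < -2$ the bracket $\tfrac{1}{h(1-h)}+2t$ is negative near $h=\tfrac12$ and positive near the endpoints, so $R$ increases, then decreases, then increases, and for $m$ near $\tfrac12$ the function $f_{m,\kappa}$ acquires two local maxima $h_1 < \tfrac12 < h_2$. To decide which wins I use the envelope identity $\tfrac{d}{dm} f_{m,\kappa}(h_i(m)) = \kappa^{(1-h_i)^2} - \kappa^{h_i^2}$: the gap $F(m) = f_{m,\kappa}(h_2) - f_{m,\kappa}(h_1)$ satisfies $F(\tfrac12)=0$ (by the symmetry $h\mapsto 1-h$ at $m=\tfrac12$) and $F'(m) > 0$, so the global maximizer is unique and on the heavier side for every $m\neq\tfrac12$. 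The maximum theorem then makes the single-valued argmap continuous off the line $m=\tfrac12$, whereas on $Z$ the two maxima are exactly tied and $h$ jumps between $h_1$ and $h_2$, giving the asserted discontinuity. By contrast $M(m,\kappa)$ never jumps: it is continuous on all of $[0,1]^2$ by the maximum theorem, since $f_{m,\kappa}(h)$ is jointly continuous and $[0,1]$ is compact. Hence $W\!D$ is continuous everywhere, including across $Z$, where the two competing maxima share the same value.

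The main obstacle is not the smooth regime, which the implicit function theorem disposes of once $R$ is in hand, but the careful bookkeeping of global optimizers in the bimodal regime, namely showing that exactly one peak wins for $m\neq\tfrac12$ via the envelope/monotonicity argument above, together with the degenerate boundaries: $\kappa=1$, where $f_{m,\kappa}\equiv 1$ and $h$ must be defined by the limit $h^*\to m$; $\kappa\to 0$ and $m\in\{0,1\}$, where the optimizer sits at an endpoint; and the knife-edge $\kappa = e^{-2}$, where $R$ is still injective (its derivative vanishes only at $h=\tfrac12$) so the jump in $h$ degenerates to size zero. These cases must be treated separately but contain no real difficulty once the monotonicity analysis of $R$ is established.
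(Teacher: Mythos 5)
Your proof is correct and follows the same overall skeleton as the paper's --- a unimodal/bimodal dichotomy at $\kappa=e^{-2}$, smoothness via an implicit/inverse function theorem on the stationarity condition, continuity of $W\!D$ from continuity of the max, and discontinuity of $h$ from the mode switch at $m=1/2$ --- but your execution is sharper at exactly the two places where the paper's argument is thinnest. The paper obtains the threshold from the single pointwise computation $s''_{1/2,\kappa}(1/2)>0\Leftrightarrow\kappa<e^{-2}$, which only certifies bimodality on the symmetry line $m=1/2$; your reduction of stationarity to $R(h)=m/(1-m)$ with $(\ln R)'(h)=\tfrac{1}{h(1-h)}+2\ln\kappa\geq 4+2\ln\kappa$ proves unimodality for \emph{every} $m$ once $\kappa>e^{-2}$, which is what the smoothness claim on the whole strip actually requires (the paper asserts this via the set $N$ without proof). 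Likewise, the paper disposes of the bimodal regime with ``due to symmetry, the dominant mode switches at $m=1/2$''; symmetry only yields the tie at $m=1/2$, and your envelope computation $F'(m)=g(h_2)-g(h_1)>0$ with $g(h)=\kappa^{(1-h)^2}-\kappa^{h^2}$ strictly increasing is the missing step showing that exactly one peak wins for every $m\neq 1/2$, so that the argmax is single-valued (hence continuous) off the line $m=1/2$ and genuinely jumps on it. Your handling of the degenerate boundaries ($\kappa=1$, $\kappa=0$, $m\in\{0,1\}$, and the knife-edge $\kappa=e^{-2}$, where the jump degenerates to size zero so that $h$ is in fact continuous at the endpoint $(1/2,e^{-2})$ of $Z$) goes beyond the paper and correctly flags a slight imprecision in the statement: the discontinuity set should be $\{1/2\}\times[0,e^{-2})$ rather than the closed interval. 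The cost of your route is only length; what it buys is an argument that covers all $(m,\kappa)$ rather than the symmetric slice, at no loss of elementarity.
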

\begin{proof}
The function $s_{m,\kappa}(h') = m \kappa^{(1-h')^2} + (1-m)
\kappa^{h'^2}$ used in line~\ref{line:argmax} of
algorithm~\ref{algo:budget} inside the $\arg\max$ expression is a
weighted sum of two Gaussian kernels. Depending on the parameters $m$
and $\kappa$, it can have one or two modes. It has two modes for
parameters in $Z$, as can be seen from an elementary calculation
yielding
$s''_{1/2,\kappa}(1/2) > 0 \Leftrightarrow \kappa < e^{-2}$. Due to
symmetry, the dominant mode switches at $m=1/2$. The inverse function
theorem applied to branches of $s_{m, \kappa}$ implies that
$h(m, \kappa) = \arg\max_{h'}\{s_{m,\kappa}(h')\}$ and
$W\!D(m, \kappa) = (\alpha_i + \alpha_j) \cdot \big(m^2 + (1-m)^2 - [s_{m,\kappa}(h(m,\kappa))]^2 + 2m(1-m)\kappa\big)$
vary smoothly with their parameters as long as the same mode is active.
The maximum operation is continuous, and so is $W\!D$. For each $m$ there
is a critical value of $\kappa \leq e^{-2}$ where $s_{m,\kappa}$
switches from one to two modes. We collect these parameter
configurations in the set $N$. On $N$ (in contrast to $Z$), $h$ is
continuous. With the same argument as above, $h$ and $W\!D$ are smooth
outside $N \cup Z$.
\hfill$\qed$
\end{proof}

Bilinear interpolation is well justified if the function is continuous,
and differentiable within each grid cell. The above lemma ensures this
property for $\kappa > e^{-2}$, and it furthermore indicates that for
its continuity, interpolating $W\!D$ is preferable over interpolating
$h$. The regime $\kappa < e^{-2}$ corresponds to merging two points in
a distance of more than two ``standard deviations'' of the Gaussian
kernel. This is anyway undesirable, since it can result in a large
weight degradation. In fact, if $s_{m,\kappa}$ has two modes, then the
optimal merge is close to the removal of one of the points, which is
known to give poor results \cite{Wang:2012}.
\section{Experimental Evaluation}
\label{section:experiments}

In this section we evaluate our method empirically, with the aim to
investigate its properties more closely, and to demonstrate its
practical value. To this end, we'd like to answer the following
questions:

\begin{compactenum}
\item
	Which speed-up is achievable?
\item
	Do we pay for speed-ups with reduced test accuracy?
\item
	How do results depend on the budget size?
\item
	How much do merging decision differ from the original method?
\end{compactenum}

To answer these questions we compare our algorithm to ``standard'' BSGD
with merging based on golden section search. We have implemented both
algorithms in C++; the implementation is available from the first
author's homepage.%
\footnote{
\url{https://www.ini.rub.de/the_institute/people/tobias-glasmachers/\#software}
}
We train SVM models on the binary classification problems SUSY, SKIN,
IJCNN, ADULT, WEB, and PHISHING,
covering a range of different sizes. The regularization parameter
$C = \frac{1}{n \cdot \lambda}$ and the kernel parameter $\gamma$ were
tuned with grid search and cross-validation. The data sets are
summarized in table~\ref{table:dataSetsProperties}. SVMs were trained
with 20 passes through the data, except for the huge SUSY data, where we
used a single pass.
%The average time per pass should be independent of the number of passes.

\begin{table}
\begin{center}

	\caption{
		\label{table:dataSetsProperties}
		Data sets used in this study, hyperparameter settings,
    	and test accuracy of the exact SVM model found by LIBSVM.
	}

	\resizebox{\columnwidth}{!}{
	 \begin{subtable}{0.55\linewidth}
	\begin{tabular}{lrrrrr}
	\hline
	data set & size & features & $C$ & $\gamma$ & accuracy \\
	\hline
	SUSY & 4,500,000 & 18 & $2^5$ & $2^{-7}$ & $79.79\%$ \\
	SKIN & 183,793 & 3 & $2^5$ & $2^{-7}$ & $99.96\%$ \\
	IJCNN & 49,990 & 22 & $2^5$ & $2^1$ & $98.77\%$ \\
	\hline
	\end{tabular}
	\end{subtable}%
	 \begin{subtable}{0.55\linewidth}
	\begin{tabular}{lrrrrr}
	\hline
	data set & size & features & $C$ & $\gamma$ & accuracy \\
	\hline
	ADULT & 32,561 & 123 & $2^5$ & $2^{-7}$ & $84.82\%$ \\
	WEB & 17,188 & 300 & $2^3$ & $2^{-5}$ & $98.81\%$ \\
	PHISHING & 8,315 & 68 & $2^3$ & $2^3$ & $97.55\%$ \\
	\hline
	\end{tabular}
	\end{subtable}%
	}
\end{center}
\end{table}

To answer the first question, we trained SVM models with BSGD,
comparing golden section search (GSS) with our new algorithms looking up
$h(m, \kappa)$ (Lookup-h) or $W\!D(m, \kappa)$ (Lookup-WD). For
reference, we also ran golden section search with precision
$\varepsilon = 10^{-10}$ (GSS-precise). We used two different budget
sizes for each problem.

All methods found SVM models with comparable accuracy as shown in
table~\ref{table:accuracy}; in fact, in most cases the systematic
differences are below one standard deviation of the variability between
different runs.%
\footnote{Note that with increasing number of passes (or epochs) the
standard deviation does not tend to zero since the training problem is
non-convex due to the budget constraint.}
In contrast, the time spent on budget maintenance differs significantly
between the methods. In figure~\ref{figure:TimeProfiles} we provide a
detailed breakdown of the merging time, obtained with a profiler.

Lookup-WD and Lookup-h are faster than GSS, which is (unsurprisingly)
faster than GSS-precise. The results are very systematic, see
table~\ref{table:training-time} and figure~\ref{figure:TimeProfiles}.
The greatest savings of about $44\%$ of the total training time are
observed for the rather large SUSY data set. Although the speed-up can
also be insignificant, like for the WEB data, lookup is never slower
than GSS. The actual saving depends on the cost of kernel computations
and on the fraction of SGD iterations in which merging occurs. The
latter quantity, which we refer to as the merging frequency, is provided
in table~\ref{table:training-time}.
We observe that the savings shown in figure~\ref{figure:TimeProfiles}
nicely correlate with the merging frequency.

The profiler results provide a more detailed understanding of the
differences: replacing GSS with Lookup-h significantly reduces the time
for computing $h(m, \kappa)$. Replacing Lookup-h with Lookup-WD removes
further steps in the calculation of $W\!D(m, \kappa)$, but practically
speaking the difference is hardly noticeable.

Overall, our method offers a systematic speed-up. The speed-up does not
come at any cost in terms of solution precision. This answers the first
two questions.

If the budget size is chosen so large that merging is never needed then
all tested methods coincide, however, this defeats the purpose of using
a budget in the first place. We find that the merging frequency is
nearly independent of the budget size as long as the budget is
significantly smaller than the number of support vectors of the full
kernel SVM model, and hence the fraction of runtime saved is independent
of the budget size. The results in figure~\ref{figure:TimeProfiles} are
in line with this expectation, answering the third question.

In the next experiment we have a closer look at the impact of
lookup-based merging decisions by investigating the behavior in single
iterations, as follows. During a run of BSGD we execute GSS and Lookup-WD
in parallel. We count the number of iterations in which the merging
decisions differ, and if so, we also record the difference between the
weight degradation values. The results are presented in
table~\ref{table:training-time}.
They show that the decisions of the two
methods agree most of the time, for some problems in more than $99\%$ of
all budget maintenance events.

Finally, we investigate the precision with which the weight degradation
is estimated by the different methods. While GSS can solve the problem
to arbitrary precision, the reference implementation determines
$h(m, \kappa)$ only to a rather loose precision of $\varepsilon = 0.01$ in
order to save computation time. In contrast, we ran GSS to high
precision $\varepsilon = 10^{-10}$ when precomputing the lookup table,
however, we may lose some precision due to bilinear interpolation. This
loss shrinks as the grid size grows, which comes at added storage cost,
but without any runtime cost.
We investigate the precision of GSS and Lookup-WD by comparing them to
GSS-precise, which is considered a reasonable approximation of the exact
minimum of $\|\Delta\|^2$. For both methods we record the factor by
which their squared weight degradations exceed the minimum, see
table~\ref{table:training-time}.
All factors are very close to one, hence none of the algorithms is
wasteful in terms of weight degradation, and indeed Lookup-WD with a
grid size of $400 \times 400$ is more precise on all 6 data sets. This
answers our last question.

\begin{table}
\begin{center}
	\caption{
		\label{table:accuracy}
		Test accuracy achieved by the different methods, averaged over 5 runs
		at different budget sizes.
	}
	%\resizebox{\columnwidth}{!}{
	\begin{tabular}{lccccc}
	\hline
	data set & budget & test accuracy& test accuracy & test accuracy & test accuracy \\
	         & size   & GSS-precise & GSS-standard           & Lookup-h      & Lookup-WD     \\
	\hline
	SUSY     & $100$ & $76.975	\pm1.372$&	$76.628\pm	2.030$&$	76.934\pm	1.426$&	$76.884\pm	1.261$ \\
	         
	         & $500$  &$76.989\pm	3.109$	&$75.583\pm	3.0558$	&$75.581\pm	2.558$&	$75.570\pm	3.925$\\
	         
	SKIN     & $100$  &$99.621\pm	0.711$&	$99.629	\pm0.852$&	$99.621\pm	0.201$	&$99.617\pm	0.877$\\
	
	         & $200$  & $99.868 \pm 0.033$ & $99.877 \pm 0.053$ & $99.855 \pm 0.054$ & $99.754 \pm 0.089$ \\

	IJCNN    & $100$  & $97.141 \pm 0.317$ & $96.807 \pm 0.344$ & $97.132 \pm 0.371$ & $97.130 \pm 0.363$ \\

	         & $500$  & $98.138 \pm 0.158$ & $98.055 \pm 0.334$ & $98.113 \pm 0.448$& $98.070 \pm 0.372$ \\

	ADULT    & $100$  & $84.234 \pm 0,883$ & $84.166 \pm 0.701$ & $84.164 \pm 0.988$ & $84.200 \pm 0.798$ \\

	         & $500$   & $84.280 \pm 0.800$ & $83.739 \pm 1.303$ & $83.836 \pm 1.157$ & $83.949 \pm 1.001$ \\
	         
	         WEB    & $100$  & $98.805 \pm 0,026$ & $98.793 \pm 0.027$ & $98.783 \pm 0.045$ & $98.793 \pm 0.039$ \\
   
	         & $500$   & $98.809 \pm 0,023$ & $98.781 \pm 0.047$ & $98.799 \pm 0.029$ & $98.807 \pm 0.016$ \\
	     
	PHISHING & $100$  & $96.554 \pm 0.158$ & $96.254 \pm 0.301$ & $96.539 \pm 0.242$ & $96.389 \pm 0.371$ \\

	         & $500$  & $97.555 \pm 0.187$ & $97.517 \pm 0.292$ & $97.518 \pm 0.280$ & $97.525 \pm 0.201$\\
	      
	\hline
	\end{tabular}
\end{center}
\end{table}

\begin{figure}[]%
    \centering
    \includegraphics[width=0.23\textwidth,height=0.16\textheight]{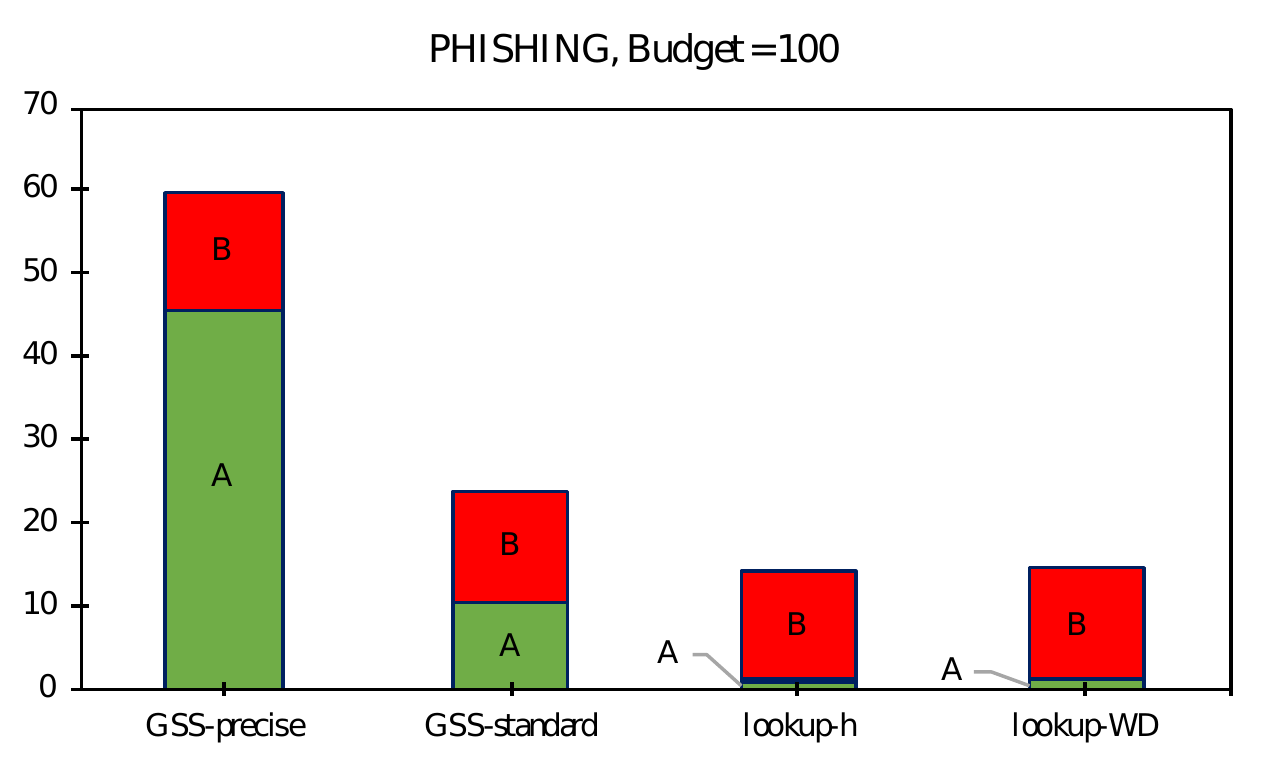}
    \includegraphics[width=0.23\textwidth,height=0.16\textheight]{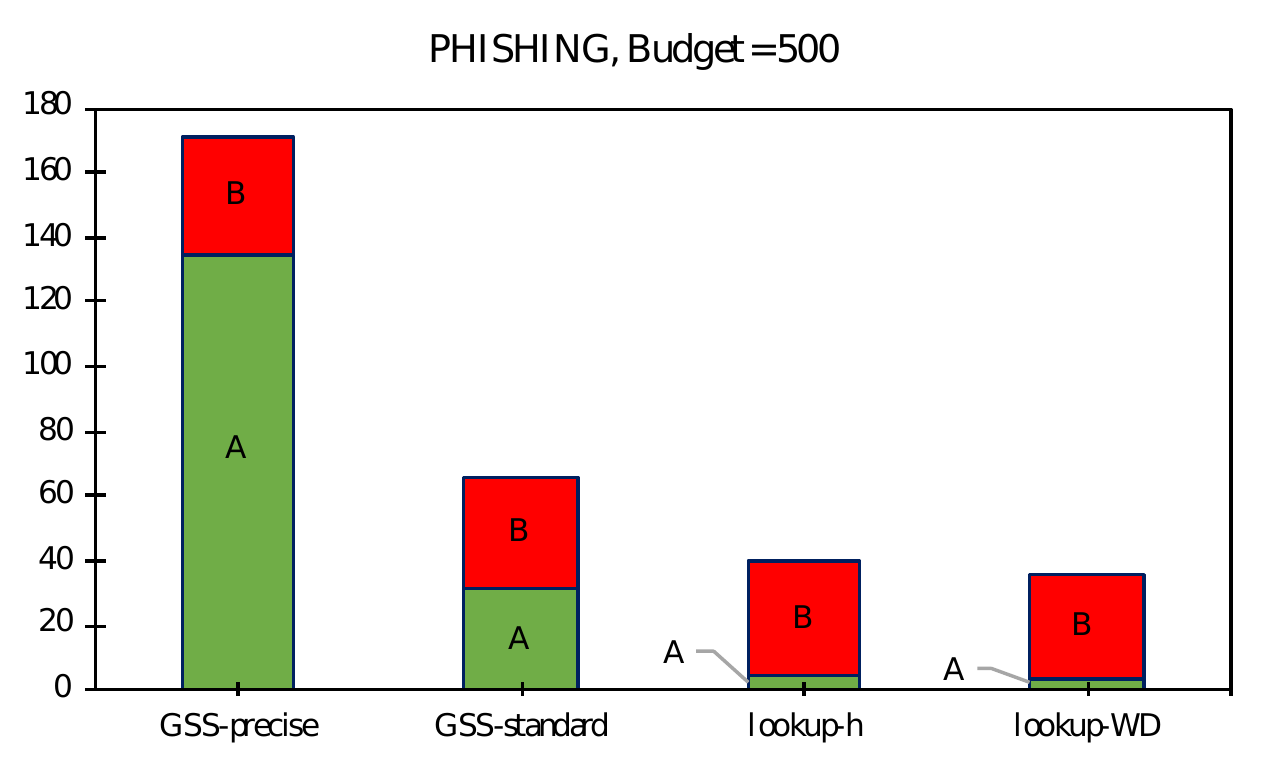}
    ~~~~
    \includegraphics[width=0.23\textwidth,height=0.16\textheight]{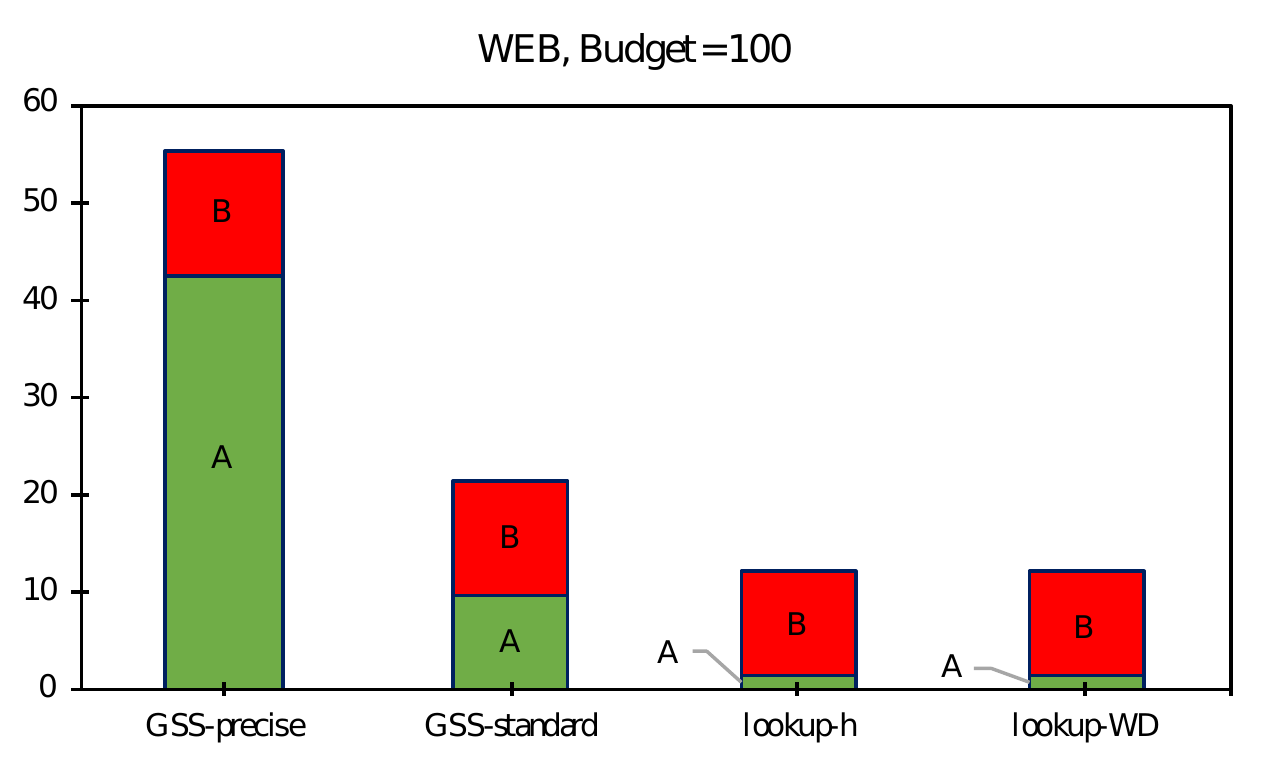}
    \includegraphics[width=0.23\textwidth,height=0.16\textheight]{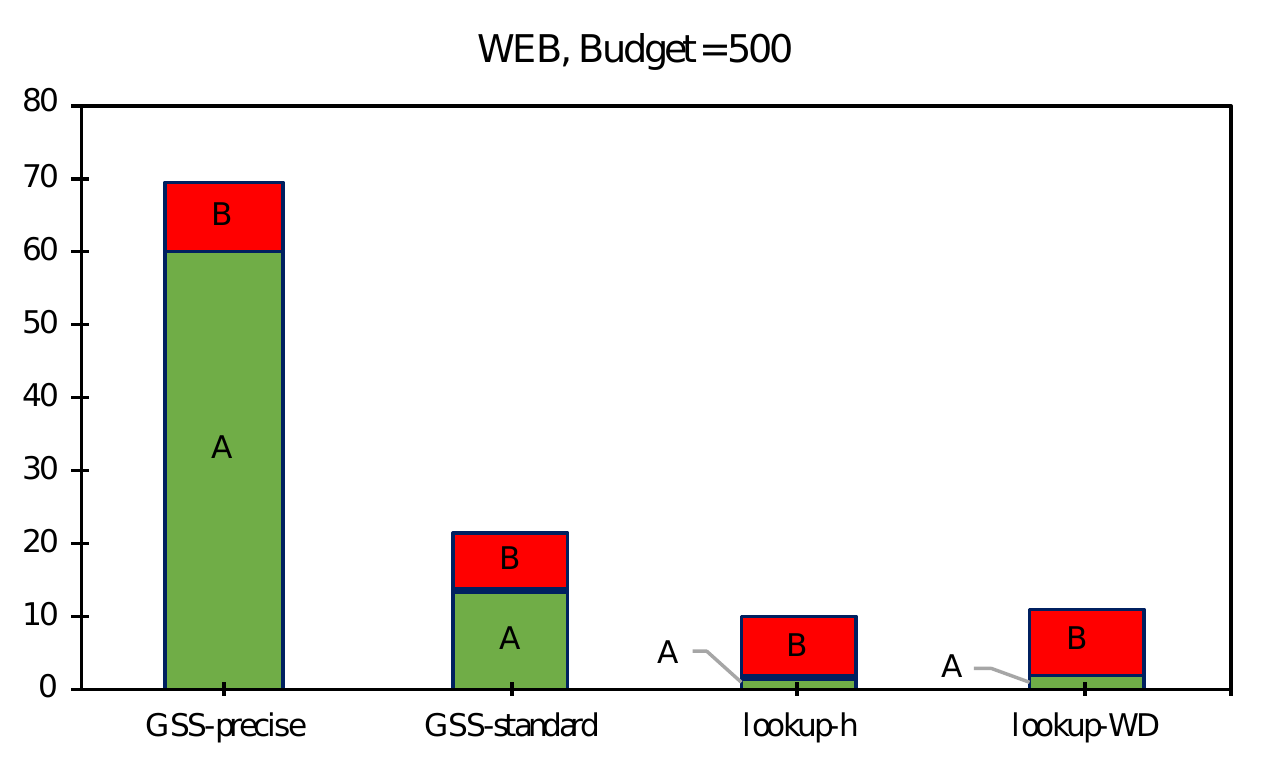}
    \\
    \includegraphics[width=0.23\textwidth,height=0.16\textheight]{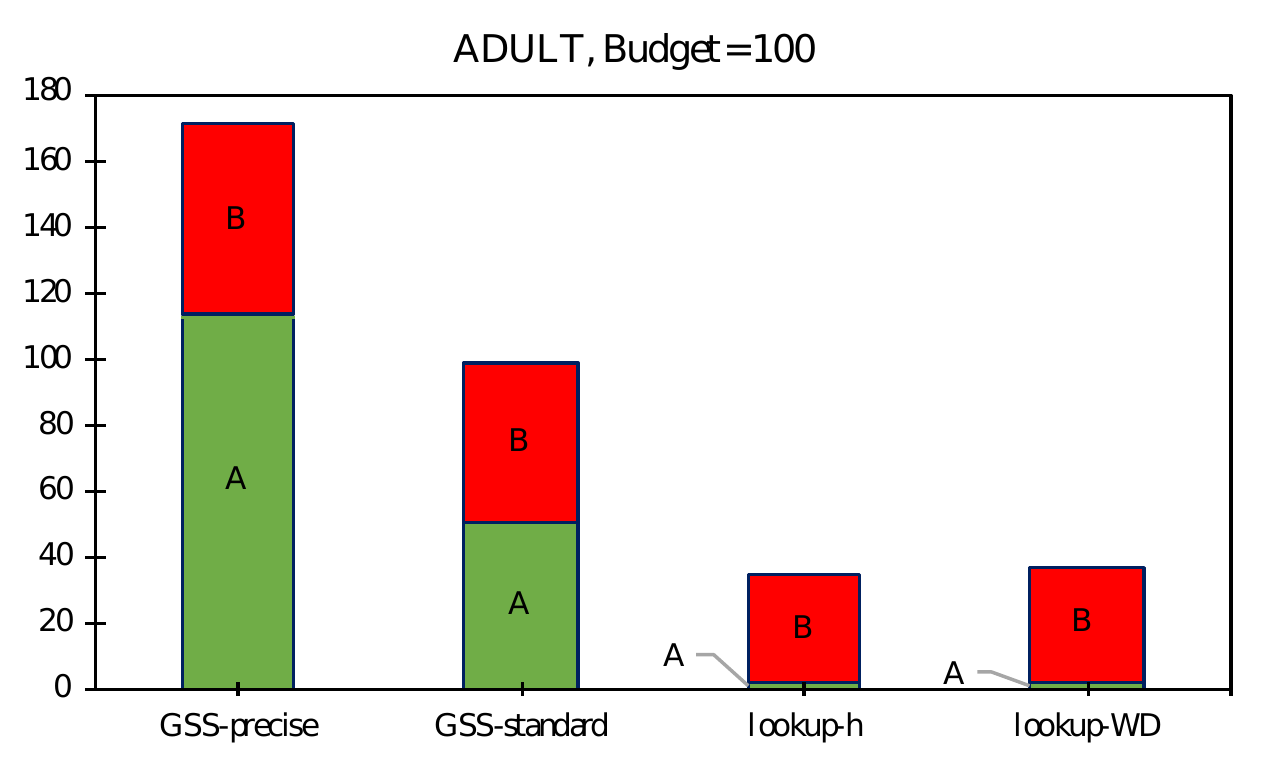}
    \includegraphics[width=0.23\textwidth,height=0.16\textheight]{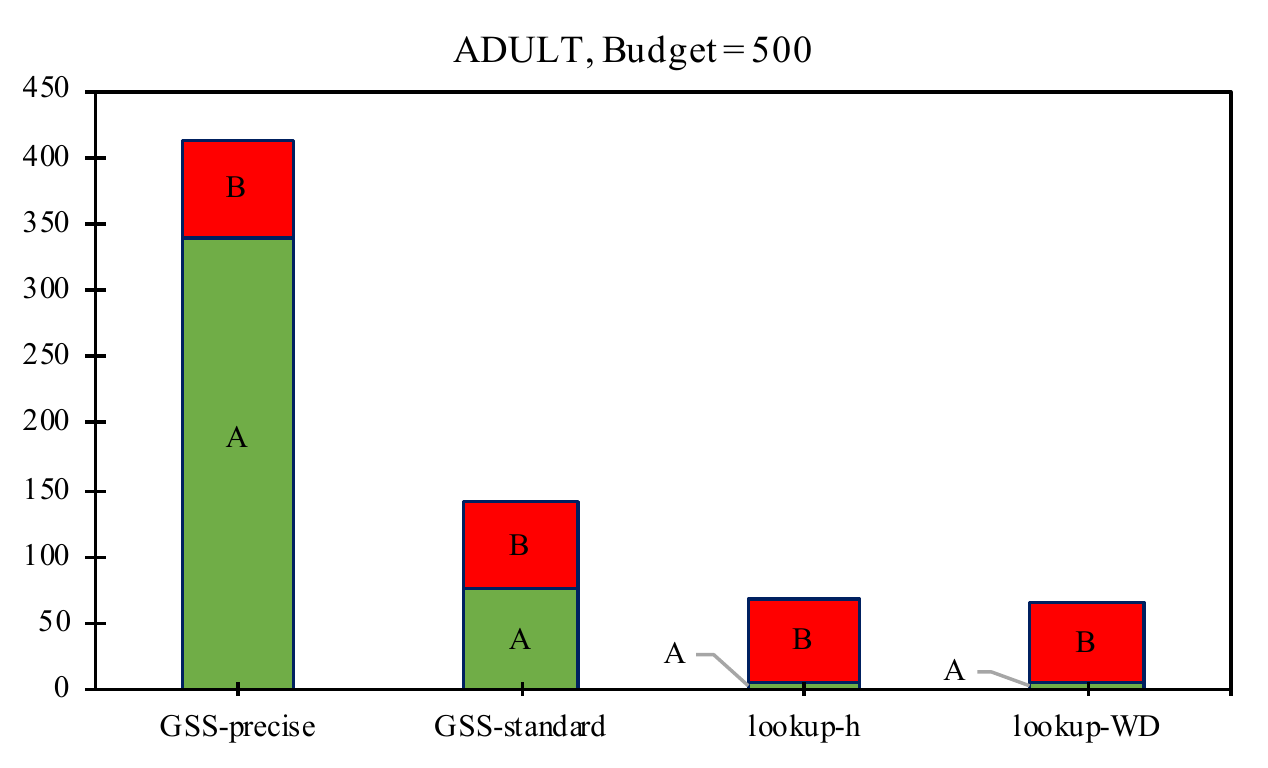}
    ~~~~
    \includegraphics[width=0.23\textwidth,height=0.16\textheight]{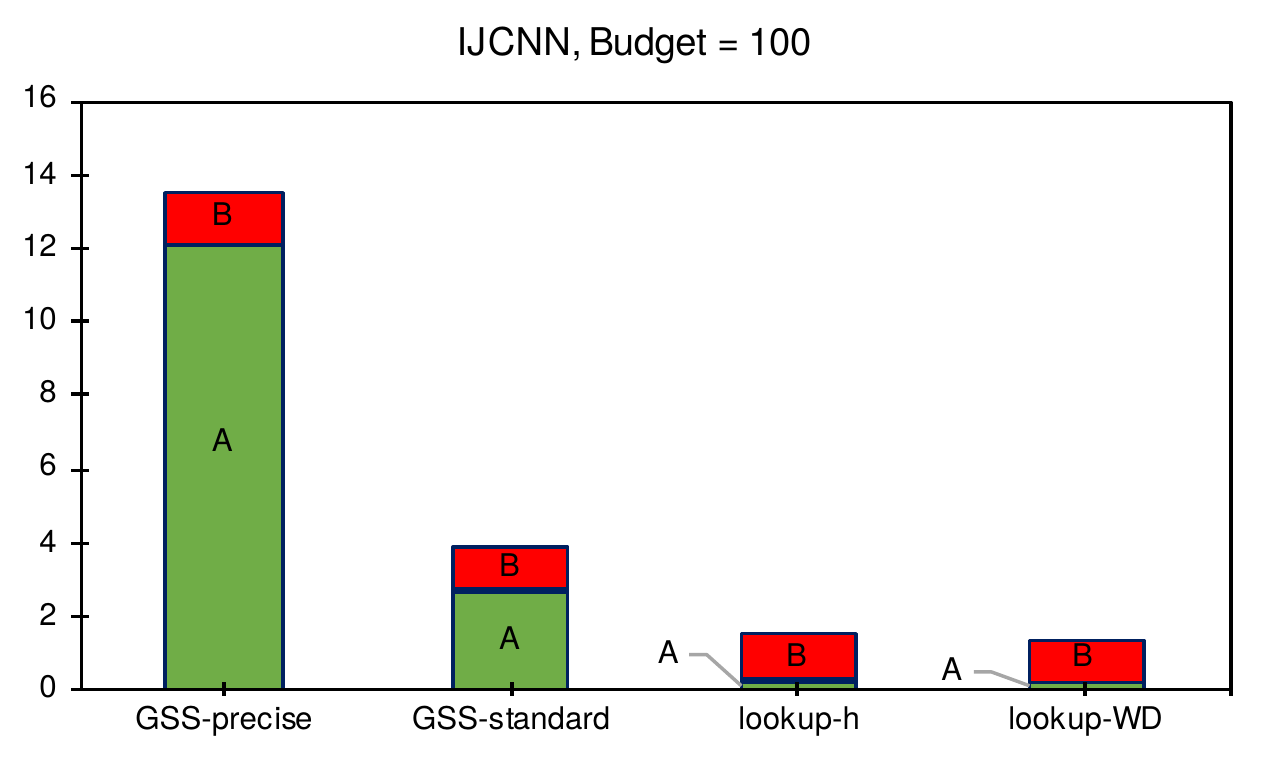}
    \includegraphics[width=0.23\textwidth,height=0.16\textheight]{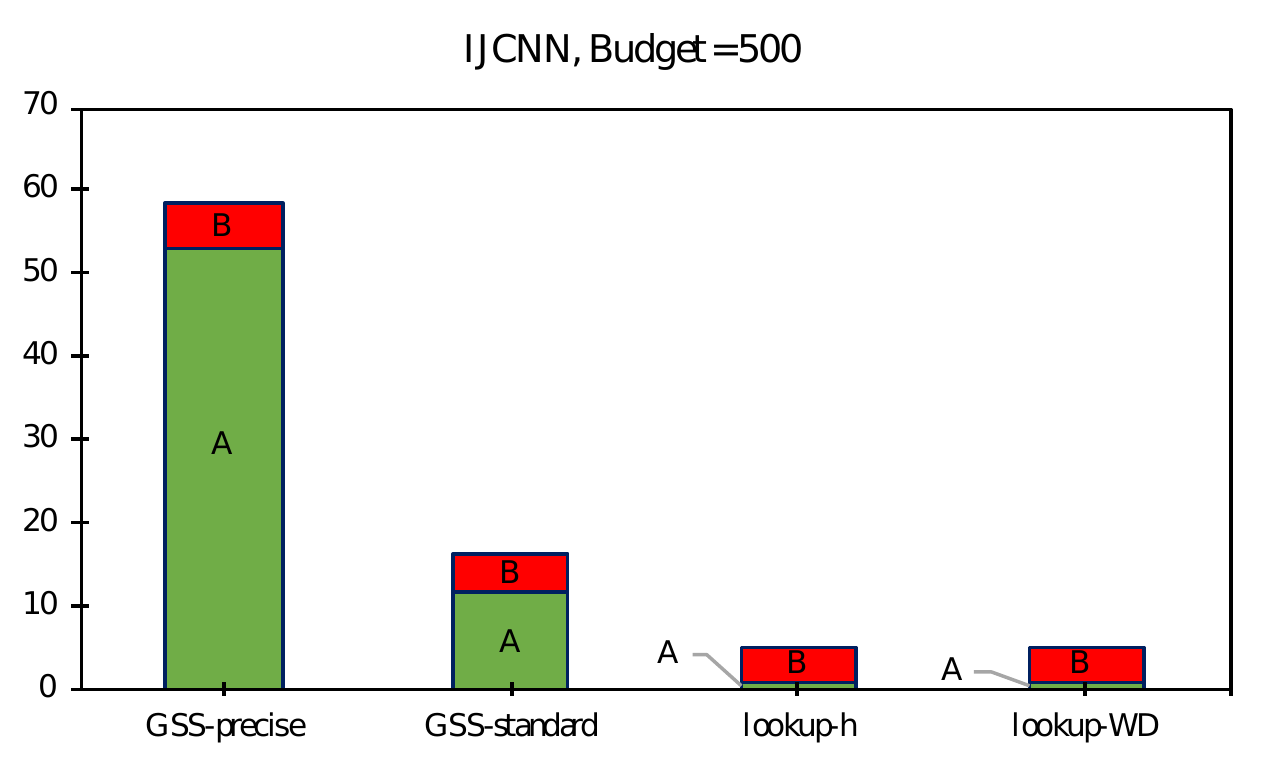}
    \\
    \includegraphics[width=0.23\textwidth,height=0.16\textheight]{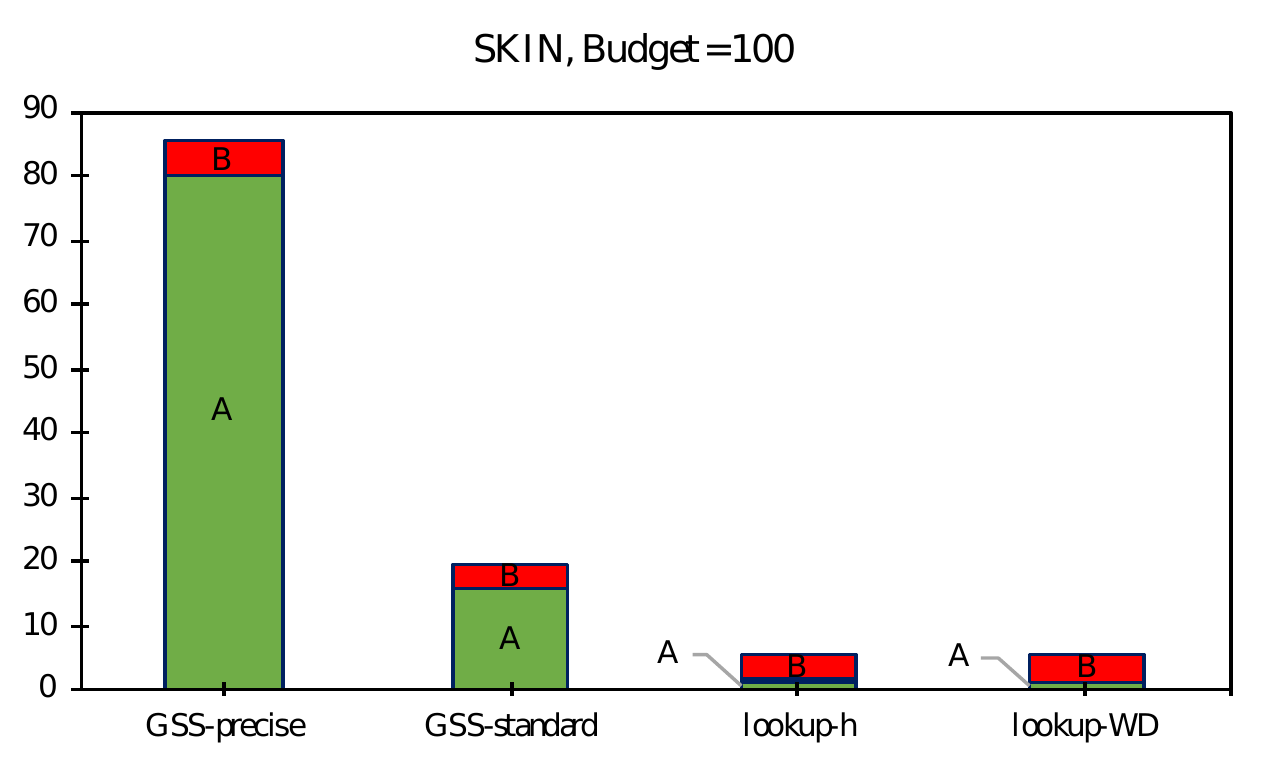}
    \includegraphics[width=0.23\textwidth,height=0.16\textheight]{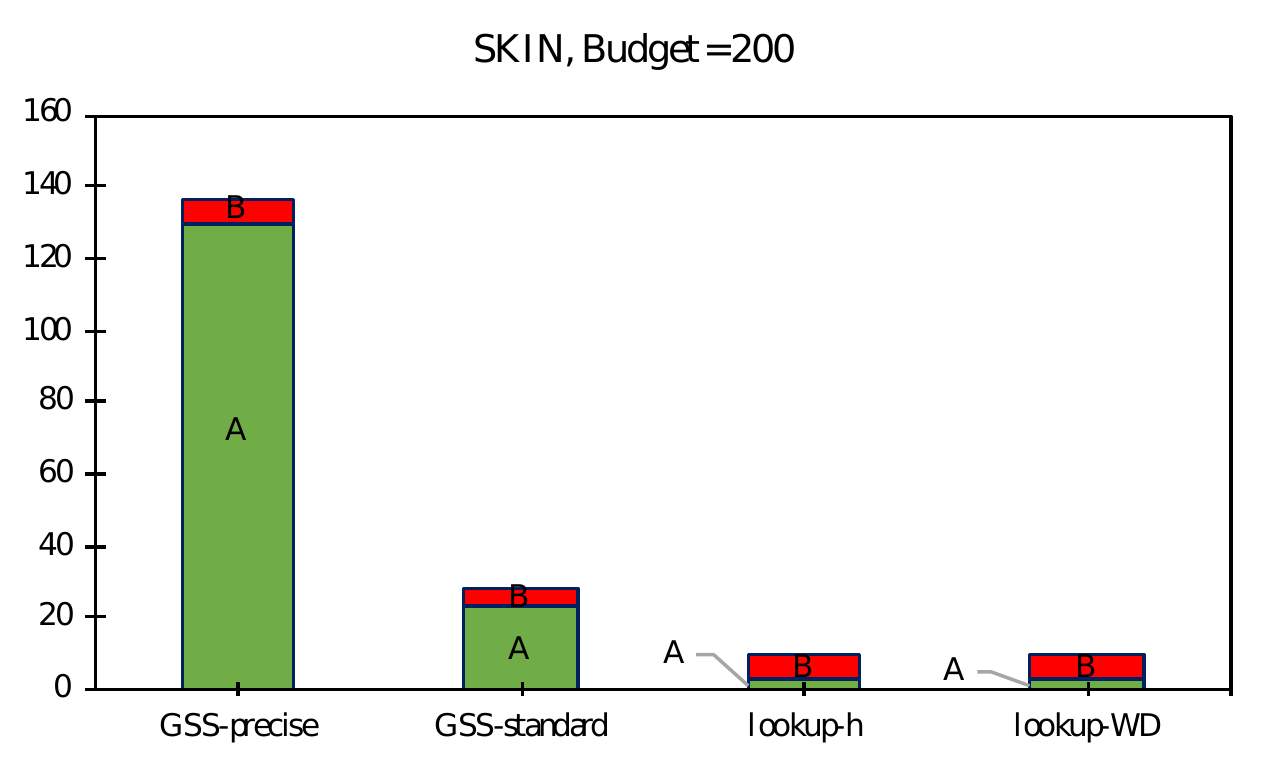}
    ~~~~
    \includegraphics[width=0.23\textwidth,height=0.16\textheight]{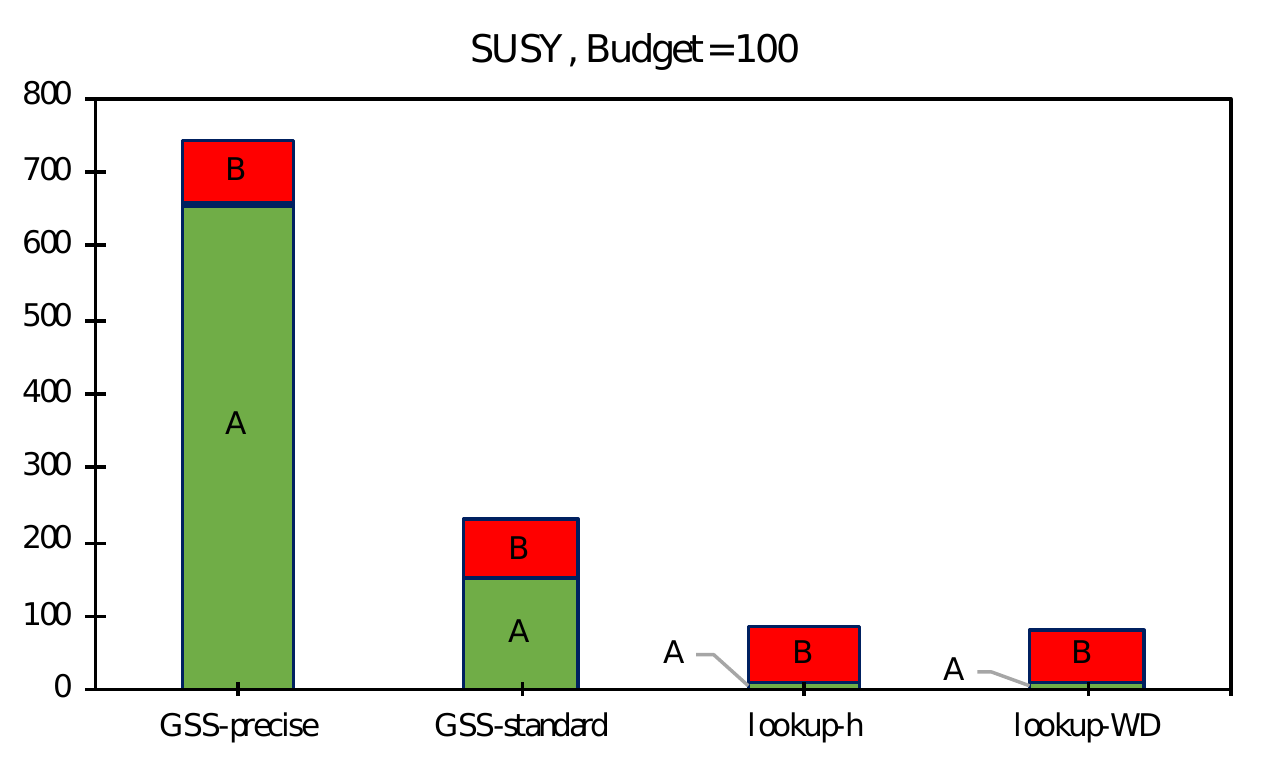}
    \includegraphics[width=0.23\textwidth,height=0.16\textheight]{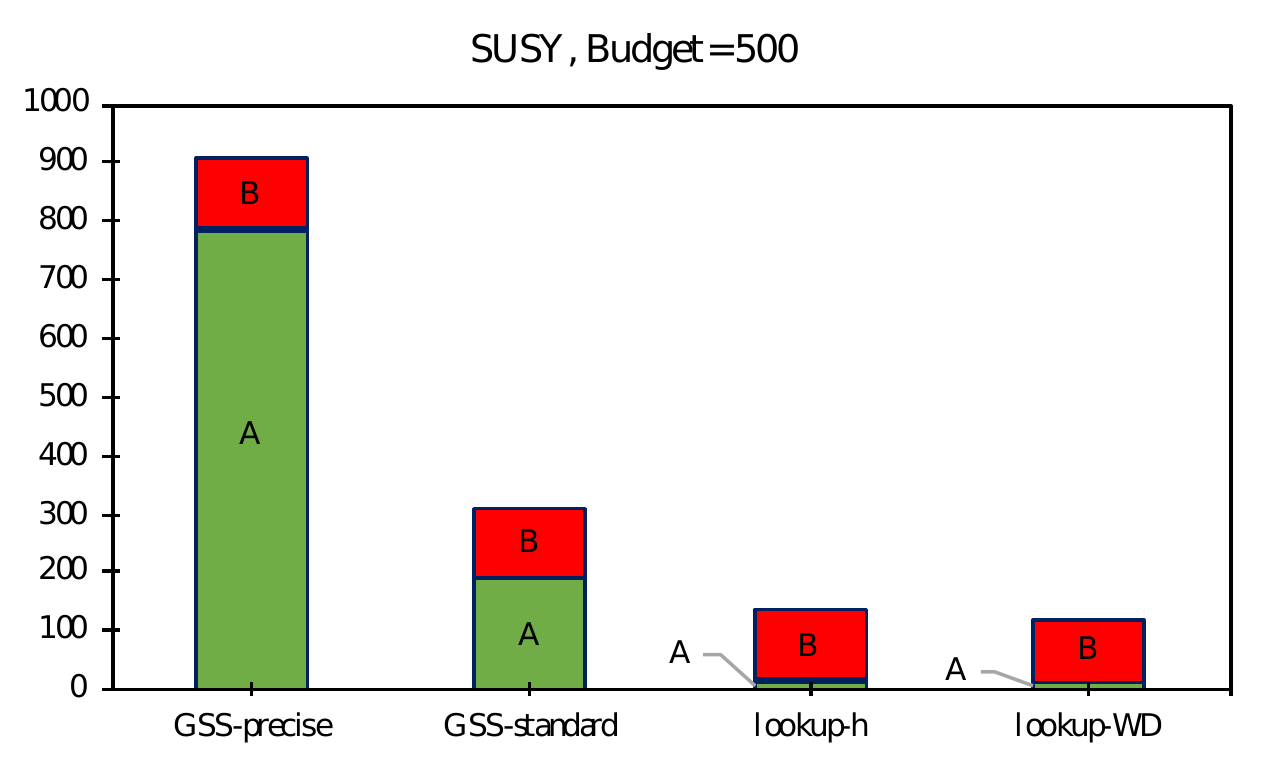}
	\caption{
		\label{figure:TimeProfiles}
		Breakdown of the merging time in seconds for GSS-precise, GSS,
		Lookup-h and Lookup-WD. Section $A$ represents the time invested
		to compute $h$ using either golden section search or lookup. For
		the Lookup-WD method the same bar represents the look-up of
		$W\!D(m,\kappa)$. Section $B$ summarizes all other operations like
		loop overheads, the computation of $\alpha_z$, and the
		construction of the final merge vector $z$.
	}
\end{figure}

\begin{table}
\begin{center}
	\caption{
		\label{table:training-time}
		Relative improvement of the total training time with respect to golden section search
		averaged over 5 runs (Lookup-h vs. GSS-standard and lookup-WD vs. GSS-standard), and
		fraction of merging events for budget size 100 and statistics on the quality of merging
		decisions (refer to the text for details).
	}
	         
	\resizebox{\columnwidth}{!}{
	\begin{tabular}{lrrrrrrr}
	\hline
	data set~ & ~budget~ & ~Lookup-h vs.~ & ~Lookup-WD vs.~ & ~merging~   & ~equal~merging~ & ~factor~    & ~factor~    \\
	          & ~size~   & ~GSS-standard~ & ~GSS-standard~  & ~frequency~ & ~decisions~     & ~GSS~       & ~lookup-WD~ \\
	\hline
	SUSY     & $100$ & $43.911\%$ & $43.396\%$ &  $43\%$ & $93.64\%$     & $1.01795$ & $1.00733$ \\
	         & $500$ & $39.201\%$ & $39.199\%$ \\

	SKIN     & $100$ & $20.515\%$ & $17.788\%$ & $16\%$ & $74.31\%$     & $1.00047$ & $1.00005$ \\
	         & $200$ & $14.173\%$ & $14.900\%$ \\

	IJCNN    & $100$ & $28.091\%$ & $30.372\%$ &   $17\%$ & $91.79\%$     &$1.02429$ & $1.00149$ \\
	         & $500$ & $30.569\%$ & $29.861\%$ \\

	ADULT    & $100$ & $21.627\%$ & $18.452\%$ &  $32\%$ & $92.54\%$     & $1.05064$ & $1.00402$ \\
	         & $500$ & $22.334\%$ & $22.339\%$ \\

	WEB      & $100$ &  $3.053\%$ &  $5.649\%$ &   $6\%$ & $93.77\%$     &$1.00255$ & $1.00039$ \\
	         & $500$ &  $7.483\%$ &  $0.508\%$ \\

	PHISHING & $100$ & $15.385\%$ & $13.946\%$ &  $21\%$ & $96.96\%$     & $1.00055$ & $1.00008$ \\
	         & $500$ &  $7.563\%$ & $10.924\%$ \\
	\hline
	\end{tabular}
	}
\end{center}
\end{table}

\section{Conclusion}
We have proposed a fast lookup as a plug-in replacement for the
iterative golden section search procedure required when merging support
vectors in large-scale kernel SVM training. The new method compares
favorably to the iterative baseline in terms of training time: it offers
a systematic speed-up, resulting in computational savings of up to
$65\%$ of the merging time and up to $44\%$ of the total training time,
while the training time is never increased.
With our method, nearly the full computation time is spent on actual SGD
steps, while the fraction of efforts spent on budget maintenance can be
reduced significantly. We have demonstrated that our approach results in
virtually indistinguishable and even slightly more precise merging
decisions. It is for this reason that the speed-up comes at absolutely no
cost in terms of predictive accuracy.

\section*{Acknowledgments}

We acknowledge support by the Deutsche Forschungsgemeinschaft (DFG)
through grant GL~839/3-1.

\bibliographystyle{plain}

\end{document}